\DeclareMathAlphabet\mathbb{U}{msb}{m}{n}
\def\Rset{\mathbb{R}}
\def\Zset{\mathbb{Z}}
\let\Pr\undefined
\DeclareMathOperator*{\Pr}{\mathbb{P}}
\DeclareMathOperator*{\E}{\mathbb E}
\DeclareMathOperator*{\argmax}{argmax}
\DeclareMathOperator{\sign}{sign}
\DeclarePairedDelimiter{\abs}{\lvert}{\rvert} 
\DeclarePairedDelimiter{\bracket}{[}{]}
\DeclarePairedDelimiter{\curl}{\{}{\}}
\DeclarePairedDelimiter{\paren}{(}{)}
\newcommand{\sC}{{\mathscr C}}
\newcommand{\sD}{{\mathscr D}}
\newcommand{\sE}{{\mathscr E}}
\newcommand{\sH}{{\mathscr H}}
\newcommand{\sM}{{\mathscr M}}
\newcommand{\sX}{{\mathscr X}}
\newcommand{\sY}{{\mathscr Y}}
\newcommand{\hh}{{\sf h}}
\newcommand{\ignore}[1]{}
\newcommand{\pred}{{\sY}_{p}}
\declaretheorem{theorem}
\newtheorem{definition}[theorem]{Definition}
\title{Beyond Tsybakov: Model Margin Noise and $\sH$-Consistency Bounds}
\author{
Mehryar Mohri\textsuperscript{\rm 1,\rm 2},
Yutao Zhong\textsuperscript{\rm 1}
}
\begin{document}

\maketitle

\begin{abstract}
  We introduce a new low-noise condition for classification, the
  \emph{Model Margin Noise (MM noise)} assumption, and derive enhanced
  $\sH$-consistency bounds under this condition. MM noise is
  \emph{weaker} than Tsybakov noise condition: it is implied by
  Tsybakov noise condition but can hold even when Tsybakov fails,
  because it depends on the discrepancy between a given hypothesis and
  the Bayes-classifier rather than on the intrinsic distributional
  minimal margin (see Figure~\ref{fig:mm-weaker-than-tsy} for an
  illustration of an explicit example). This hypothesis-dependent
  assumption yields enhanced $\sH$-consistency bounds for both binary
  and multi-class classification. Our results extend the enhanced
  $\sH$-consistency bounds of \citet{mao2025enhanced} with the same
  favorable exponents but under a weaker assumption than the Tsybakov
  noise condition; they interpolate smoothly between linear and
  square-root regimes for intermediate noise levels. We also
  instantiate these bounds for common surrogate loss families and
  provide illustrative tables.
\end{abstract}

\section{Introduction}

The design and analysis of surrogate losses are fundamental to
classification.  Classical analyses established
\emph{Bayes-consistency} for large families of convex surrogates and
derived bounds on the surrogate-to-target excess error in the binary
setting \citep{Zhang2003, bartlett2006convexity,
  steinwart2007compare}, with sharper constants for specific losses
such as $q$-norm SVMs and proper losses \citep{chen2004support,
  reid2009surrogate}.  In multi-class classification, subsequent work
identified which surrogates are Bayes-consistent and which fail (e.g.,
certain hinge variants), while establishing consistency for
sum-exponential/logistic and constrained families
\citep{zhang2004statistical, tewari2007consistency,
  crammer2001algorithmic, WestonWatkins1999, lee2004multicategory}.  A
complementary thread investigated how \emph{growth rates} of these
bounds behave near zero: smoothing often degrades rates, polyhedral
losses can achieve linear behavior, and broad smooth/proper losses
admit square-root lower bounds \citep{mahdavi2014binary,
  frongillo2021surrogate, bao2023proper}.  Yet, these guarantees apply
only to the family of all measurable functions $\sH_{\rm all}$ and are
thus not relevant to the hypothesis set that is actually used in
practice.

To close that gap, \emph{$\sH$-consistency bounds} provide
non-asymptotic guarantees specific to a fixed hypothesis set $\sH$
\citep{awasthi2022h,zhong2025fundamental}, and have since been developed broadly, from
multi-class (including max, sum, constrained, and comp-sum families)
to ranking, structured prediction, abstention/defer, multi-label
learning, adversarial settings, and beyond \citep{awasthi2022multi,
  zheng2023revisiting, mao2023cross, MaoMohriZhong2023ranking,
  MaoMohriZhong2023structured, MaoMohriZhong2024deferral,MaoMohriMohriZhong2023twostage,AwasthiMaoMohriZhong2023theoretically,mao2024multi,cortes2024cardinality,MaoMohriZhong2024}.  More
recently, \citet{mao2025enhanced} showed how \emph{enhanced
  $\sH$-consistency bounds} can be derived by relating
\emph{conditional regrets} via more general inequalities with
non-constant factors depending on the input or predictor
instance. They showed, in particular, that when a lower bound of the
surrogate loss \emph{conditional regret} is given as a power function
of the target \emph{conditional regret} with exponent $s$, this yields
enhanced bounds with low-noise exponents of the form
$\paren*{1 / \paren*{ s - \alpha \paren*{ s - 1 } }}$ under Tsybakov
noise condition.

The Tsybakov noise condition constrains the minimal margin
$\gamma(x) = \Pr(y_{\max} \mid x) - \sup_{y \neq y_{\max}} \Pr(y \mid
x)$ with $y_{\max} = \argmax_{y \in \sY} \Pr(y \mid x)$ and is
therefore purely distributional, describing a worst-case property of
the data, regardless of the hypothesis set $\sH$ being used.  We
introduce a hypothesis-dependent low-noise assumption, the
\emph{Model Margin (MM) noise}, that depends on the \emph{model
  margin}
$\mu(h, x) = \Pr(\hh^*(x) \mid x) - \Pr(\hh(x) \mid x) \geq 0$, where
$h^* \in \sH_{\rm{all}}$ is a Bayes classifier and
$\hh(x) = \argmax_{y \in \sY} h(x, y)$ is the prediction made by
hypothesis $h \colon \sX \times \sY \to \Rset$. This is the gap
between the Bayes label’s conditional probability and the hypothesis's
predicted label’s conditional probability at $x$ This
hypothesis-dependent condition is \emph{weaker} than the
distributional Tsybakov noise condition: since
$\mu\paren*{h,x} \geq \gamma\paren*{x}$ holds whenever
$\hh^*(x) \neq \hh(x)$, any bound on the minimal margin tail
immediately bounds the model margin tail (hence, Tsybakov implies MM),
but the converse need not hold, and it enables $\sH$-consistency
bounds with the same favorable noise exponents, but under a weaker
assumption.

\noindent \textbf{Our contributions.}
We summarize our main results below.
\begin{itemize}

\item \textbf{Model Margin (MM) noise condition.} We introduce the MM
  noise condition (Section~\ref{sec:def}), a hypothesis-dependent
  alternative to Tsybakov noise condition. We formally establish that
  it is a \emph{weaker} condition (``Tsybakov $\Rightarrow$ MM'') and
  can hold even when Tsybakov noise condition fails
  (Theorem~\ref{prop:mm-not-tsy}).

\item \textbf{Key property.} In Section~\ref{sec:property}, we
  establish a key property of the MM noise condition
  (Lemma~\ref{lemma:MM}), which provides an inequality bounding the
  disagreement mass $\E \bracket*{1_{\mu(h, X)> 0}}$ by a power of the
  $0$-$1$ excess error. This property is central to the derivation of
  our $\sH$-consistency bounds.

\item \textbf{Enhanced $\sH$-consistency bounds under MM noise.}  We
  derive enhanced $\sH$-consistency bounds under MM noise for both
  binary (Section~\ref{sec:binary}) and multi-class classification
  (Section~\ref{sec:multi}). These bounds preserve the favorable
  exponents from \citep{mao2025enhanced} while requiring only weaker
  assumptions.

\item \textbf{Applications.} In Section~\ref{sec:more}, we provide
  structural properties (monotonicity and invariance) of MM noise and
  illustrate our bounds for common surrogate losses, including binary
  margin-based losses and multi-class comp-sum losses.

\end{itemize}

\noindent \textbf{Relation to prior work.}  Our results improve
enhanced $\sH$-consistency bounds from \citep{mao2025enhanced} in two
ways.  First, shifting from distributional minimal margin $\gamma$ to
hypothesis-dependent model margin $\mu$ yields a \emph{weaker}
low-noise assumption: the Tsybakov condition implies MM (for any fixed
$\sH$), but MM may still hold when the distribution violates Tsybakov,
especially for restricted hypothesis sets, as $\sH$ may not contain
the 'bad' classifiers that Tsybakov's worst-case margin $\gamma(x)$ is
designed to guard against. Second, this shift produces
\emph{predictor-dependent} constants, quantified by
$\E \bracket*{1_{\mu(h, X) > 0}}^{1 / t}$, which are absent from
Tsybakov-only analyses and which align better with model selection in
practice.  At the same time, we retain the desirable exponents proven
under Tsybakov noise — so practitioners gain bounds with the same
exponents under a weaker assumption.

\section{Related work.}
\label{sec:related}

\textbf{Bayes‐consistency and surrogate losses.}
The study of Bayes‐consistency for convex surrogate losses has a long history.
For binary classification, early foundational analyses by \citet{Zhang2003}, \citet{bartlett2006convexity}, and \citet{steinwart2007compare} established Bayes‐consistency of several convex and margin‐based losses, while also deriving excess‐error or surrogate‐regret bounds. 
Specific examples include $q$‐norm SVM losses with optimal square‐root rates \citep{chen2004support} and tight regret bounds for proper losses \citep{reid2009surrogate}. 

In multi-class classification, analogous results were developed by \citet{zhang2004statistical} and \citet{tewari2007consistency}, who analyzed \emph{max}-, \emph{sum}-, and \emph{constrained}-type surrogates \citep{crammer2001algorithmic,WestonWatkins1999,lee2004multicategory}. 
They demonstrated that max and sum multi-class hinge variants fail to be Bayes‐consistent, whereas sum-exponential and sum-logistic losses, and
constrained families achieve Bayes-consistency. 
Later work unified binary and multi-class analyses under a general supervised learning framework \citep{steinwart2007compare}. 

\textbf{Growth rates and smoothness effects.}
The growth rates of excess-error bounds, that is, the behavior of the bound's functional form $\Gamma$ near zero, have been studied extensively.
Smoothing a hinge‐type loss can worsen this growth \citep{mahdavi2014binary}, while local strong convexity and Lipschitz gradients imply at best square‐root rates \citep{frongillo2021surrogate,bao2023proper}. 
Polyhedral losses attain linear rates \citep{finocchiaro2019embedding}, clarifying why piecewise‐linear surrogates such as the hinge are statistically optimal in that sense. 
These results, however, apply only to the family of all measurable functions and thus ignore the hypothesis‐set choices that dominate practical performance.

\textbf{Hypothesis‐dependent analysis.}
Bayes‐consistency guarantees are asymptotic and model‐agnostic. 
As emphasized by \citet{long2013consistency} and \citet{zhang2020bayes}, a Bayes‐consistent surrogate may yield constant test error on restricted model families, while an inconsistent one may succeed. 
This observation motivated the introduction of \emph{$\sH$‐consistency bounds} by \citet{awasthi2022h}, which relate the target estimation error within a restricted hypothesis set $\sH$ to the surrogate estimation error in a non‐asymptotic manner.

\textbf{$\sH$‐consistency bounds.}
Following the binary framework of \citet{awasthi2022h}, \citet{awasthi2022multi} extended $\sH$‐consistency to multi-class settings, covering \emph{max}, \emph{sum}, and \emph{constrained loss} families \citep{crammer2001algorithmic,weston1998multi,lee2004multicategory}.
\citet{mao2023cross} further extended these analyses to the \emph{comp‐sum losses}, encompassing cross‐entropy, generalized cross‐entropy, mean absolute error, and other hybrid surrogate losses.
A general characterization for comp‐sum and constrained losses was later provided in \citet{MaoMohriZhong2023characterization}.
Recent refinements revealed that smooth surrogates across both binary and multi-class settings exhibit a universal square‐root growth rate \citep{MaoMohriZhong2024}.
The $\sH$‐consistency framework has also been adapted to ranking \citep{MaoMohriZhong2023rankingabs,MaoMohriZhong2023ranking}, abstention and rejection learning \citep{MaoMohriZhong2023predictor,MaoMohriZhong2024score,MohriAndorChoiCollinsMaoZhong2024learning}, learning to defer \citep{MaoMohriZhong2024deferral,mao2024realizable,MaoMohriZhong2025mastering,MaoMohriMohriZhong2023twostage,mao2025theory,desalvo2025budgeted}, top‐$k$ classification \citep{cortes2024cardinality}, multi-label learning \citep{mao2024multi}, adversarial robustness \citep{awasthi2021calibration,awasthi2021finer,awasthi2023dc,AwasthiMaoMohriZhong2023theoretically}, bounded regression \citep{mao2024regression,mao2024h}, optimization of generalized metrics \citep{MaoMohriZhong2025principled}, imbalanced learning \citep{cortes2025balancing,cortes2025improved},  and structured prediction \citep{MaoMohriZhong2023structured}.

\textbf{Enhanced $\sH$‐consistency bounds.}
\citet{mao2025enhanced} generalized the previous setting by introducing \emph{enhanced $\sH$‐consistency bounds} based on refined inequalities between surrogate and target conditional regrets.
This produced distribution‐dependent exponents of the form $1 / \paren*{ s - \alpha \paren*{ s - 1 } }$ under the Tsybakov noise assumption across binary and multi-class classification.
Nevertheless, the Tsybakov noise condition constrains only the minimal margin $\gamma(x)$ and is thus purely distributional.

\textbf{This work: model‐dependent low‐noise conditions.}
Our results strengthen the above framework by replacing the minimal margin $\gamma$ with the \emph{model margin} $\mu$, which depends on both the data distribution and the chosen hypothesis. 
This yields the \emph{Model Margin (MM) noise} assumption, a \emph{weaker}, hypothesis‐dependent condition that is implied by the classical Tsybakov noise assumption but remains valid for a broader range of models and distributions, as it measures noise relative to each hypothesis rather than the minimal margin alone.
Under MM noise, $\sH$‐consistency bounds preserve the same exponent as in the Tsybakov case.
We further establish the theoretical robustness of the MM noise condition by demonstrating its monotonicity with respect to hypothesis class inclusion and its invariance under monotone score transformations.

In summary, MM noise extends enhanced $\sH$‐consistency bounds to a weaker yet more flexible, hypothesis‐dependent setting, allowing more adaptive generalization guarantees with the same favorable exponents to be established for both binary and multi-class surrogate losses.

\section{Preliminaries}
\label{sec:prelim}

\textbf{Learning setup and notation.}
We consider a supervised learning problem with an unknown distribution $\sD$ over pairs $\sX \times \sY$, where $\sX$
is the input space and $\sY$ is the label space. A hypothesis $h$ is selected from a hypothesis set $\sH \subseteq \sH_{\rm all} \coloneqq \curl*{ h \colon \sX \to \pred \mid h \text{ measurable} }$, where $\pred$ denotes the prediction space that specifies the form of model outputs. 
For instance, $\pred = \Rset$ for scalar scores in binary classification, and $\pred = \Rset^n$ for vector‐valued scores in multi‐class classification, where $n \in \Zset_{+}$ is the number of labels.

A loss function $\ell \colon \sH \times \sX \times \sY \to \Rset_{+}$ measures the prediction error. 
Its generalization error and the best‐in‐class generalization error within $\sH$ are defined as
\begin{equation*}
\sE_{\ell}(h) \coloneqq \E_{(x, y) \sim \sD} \bracket*{\ell(h, x, y)},
\qquad
\sE_{\ell}^*(\sH) \coloneqq \inf_{h \in \sH} \sE_{\ell}(h).
\end{equation*}

\noindent \textbf{Conditional errors.}  For every input $x \in \sX$,
we define the \emph{conditional error} and \emph{best-in-class
  conditional error} as
\begin{equation*}
\sC_{\ell}\paren*{h, x} \coloneqq \E_{y \mid x} \bracket*{ \ell\paren*{h, x, y} }, \quad \sC^*_{\ell}\paren*{\sH, x} = \inf_{h \in \sH} \sC_{\ell}\paren*{h, x}.
\end{equation*}
The generalization error can be rewritten as
$\sE_{\ell}(h) = \E_{X} \bracket*{ \sC_{\ell}\paren*{h, x} }$.  We
also define the \emph{conditional regret}, (how suboptimal $h$ is at a
single point $x$) and \emph{estimation error} as:
\begin{equation*}
  \Delta \sC_{\ell, \sH}\paren*{h, x}
  \coloneqq \sC_{\ell}\paren*{h, x} - \sC^*_{\ell}\paren*{\sH, x},
\quad
\sE_{\ell}(h) - \sE_{\ell}^*\paren*{\sH}.
\end{equation*}
The \emph{minimizability gap} (a technical term capturing how well
$\sH$ can optimize the loss pointwise) is defined as
\begin{equation*}
  \sM_{\ell}\paren*{\sH}
  \coloneqq \sE_{\ell}^*\paren*{\sH} - \E_{X} \bracket*{\sC^*_{\ell}\paren*{\sH, x}}
  \geq 0.
\end{equation*}
When $\sH$ is sufficiently rich (for example, $\sH = \sH_{\rm all}$ or
$\sE^*_{\ell}(\sH) = \sE^*_{\ell}\paren*{\sH_{\rm all}}$), this gap
vanishes. In general, it is non-zero and can be upper bounded by the
approximation error
$\sE_{\ell}^*\paren*{\sH} - \sE_{\ell}^*\paren*{\sH_{\rm all}}$
\citep{MaoMohriZhong2024}.

\noindent \textbf{$\sH$‐consistency bounds.}  Let $\ell_{1}$ be a
surrogate loss and $\ell_{2}$ the target loss.  An
\emph{$\sH$‐consistency bound} relates their estimation errors via a
non-asymptotic bound:
\begin{align*}
& \sE_{\ell_{2}}\paren*{ h } - \sE_{\ell_{2}}^*\paren*{ \sH } + \sM_{\ell_{2}}\paren*{ \sH }\\
& \qquad \leq
  \Gamma\paren*{ \sE_{\ell_{1}}\paren*{ h } - \sE_{\ell_{1}}^*\paren*{ \sH }
  + \sM_{\ell_{1}}\paren*{ \sH } },
\end{align*}
for a non-decreasing concave function $\Gamma$ with
$\Gamma\paren*{ 0 } = 0$ \citep{mao2023cross}.  This guarantee shows
that reducing the surrogate estimation error implies a proportional
reduction in the target error within the same hypothesis set.

\noindent \textbf{Enhanced $\sH$-consistency bound.}
An enhanced form introduces a multiplicative hypothesis-dependent factor $\gamma(h)$:
\begin{align*}
& \sE_{\ell_{2}}\paren*{ h } - \sE_{\ell_{2}}^*\paren*{ \sH } + \sM_{\ell_{2}}\paren*{ \sH }\\
& \qquad \leq
\Gamma\paren*{ \gamma\paren*{ h } \paren*{ \sE_{\ell_{1}}\paren*{ h } - \sE_{\ell_{1}}^*\paren*{ \sH } + \sM_{\ell_{1}}\paren*{ \sH } }}.
\end{align*}
In our analysis, $\Gamma$ will be expressed as the power function
$\Gamma\paren*{ u } = u^{ 1 / s }$ with $s \geq 1$, and
$\gamma\paren*{ h }$ will be expressed in terms of the model margin
via the term $\E_{X}\bracket*{ 1_{ \mu(h, X) > 0 } }^{\frac{1}{t}}$,
where $t$ is the conjugate of $s$ (i.e.,
$\frac{1}{s} + \frac{1}{t} = 1$).

\noindent \textbf{Pointwise bounds.}  An $\sH$‐consistency bound
typically follows from a pointwise bound relating the conditional
regrets:
\begin{equation*}
\Delta \sC_{ \ell_{2}, \sH }\paren*{ h, x }  \leq \Gamma \paren*{ \Delta \sC_{ \ell_{1}, \sH }\paren*{ h, x } }.
\end{equation*}
Taking expectation over $X$ and applying Jensen's inequality yields a
$\sH$‐consistency bound.\ignore{ A common form is the power function
$\Gamma\paren*{ u } = u^{ 1 / s }$ with $s \geq 1$ (and conjugate $t$
such that $1 / s + 1 / t = 1$), which is particularly useful in
practice.}

\noindent \textbf{Binary classification.}  For binary classification,
the label space is $\sY = \curl*{ -1, +1 }$ and the prediction space
is $\pred = \Rset$.  The target loss is the binary zero–one loss:
\begin{equation*}
\ell^{\rm{bi}}_{0-1}\paren*{ h, x, y } = 1_{ \sign\paren*{ h\paren*{ x } } \neq y },
\qquad
\sign\paren*{ t } =
\begin{cases}
+1, & t \geq 0,\\
-1, & t < 0.
\end{cases}
\end{equation*}
Let $\hh(x) = \sign(h(x))$ be the hypothesis's prediction and $\eta(x) = \paren*{Y = +1 \mid X = x}$ be the conditional probability of $Y = +1$ given $X = x$. 
For any loss function $\ell$, the conditional error can be written as
\begin{equation*}
\sC_{\ell}\paren*{ h, x }
= \eta\paren*{ x } \ell\paren*{ h, x, +1 }
+ \paren*{ 1 - \eta\paren*{ x } } \ell\paren*{ h, x, -1 }.
\end{equation*}
Typical surrogates include margin-based losses $\ell_{\Phi}\paren*{ h, x, y } = \Phi\paren*{ y h(x) }$, with $\Phi$ convex, non-increasing, and nonnegative.

\noindent \textbf{Multi-class classification.}
In the multi-class setting, $\sY = [n] = \curl*{ 1, \dots, n }$ and $\pred = \Rset^{ n }$. 
The scaler $h\paren*{ x, y }$ denotes the score assigned to label $y$, and the predicted label is $\hh\paren*{ x } = \argmax_{ y \in \sY } h\paren*{ x, y }$ (with a fixed deterministic tie-breaking rule). 
The target loss is the multi-class zero–one loss
\begin{equation*}
\ell_{0-1}\paren*{ h, x, y } = 1_{ \hh\paren*{ x } \neq y }.
\end{equation*}
Let $\Pr \paren*{ y \mid x }$ be conditional probability of
$y$ given $x$. The conditional error is
\begin{equation*}
\sC_{\ell}\paren*{ h, x } = \sum_{ y \in \sY } \Pr \paren*{ y \mid x } \ell\paren*{ h, x, y }.
\end{equation*}
Common surrogate families include max losses \citep{crammer2001algorithmic}, constrained losses \citep{lee2004multicategory}, and comp–sum losses \citep{mao2023cross}.  The following result from
\citet{awasthi2022multi} characterizes the conditional regret
of the multi-class zero-one loss.
\begin{restatable}{lemma}{ExplicitAssumption}
\label{lemma:explicit_assumption_01}
For every input $x \in \sX$,
\begin{align*}
\sC^*_{ \ell_{0-1}}\paren*{ \sH, x } &= 1 - \max_{ y \in \curl*{\hh(x) \colon h \in \sH}} \Pr \paren*{ y \mid x },\\
\Delta \sC_{ \ell_{0-1}, \sH }\paren*{ h, x }
&= \max_{ y \in \curl*{\hh(x) \colon h \in \sH}} \Pr \paren*{ y \mid x } - \Pr \paren*{ \hh\paren*{ x } \mid x }.
\end{align*}
\end{restatable}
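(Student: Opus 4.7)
The plan is to proceed by direct computation, exploiting the simple structure of the multi-class zero-one loss. The key observation is that the zero-one conditional error at $x$ depends on $h$ only through the predicted label $\hh(x)$, so the infimum over $\sH$ reduces to a maximum over the finite set of labels achievable as predictions.

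First I would expand the conditional error using the definition: since $\ell_{0-1}(h, x, y) = 1_{\hh(x) \neq y}$, summing against $\Pr(\cdot \mid x)$ gives
\begin{equation*}
\sC_{\ell_{0-1}}(h, x) = \sum_{y \in \sY} \Pr(y \mid x) \, 1_{\hh(x) \neq y} = 1 - \Pr(\hh(x) \mid x).
\end{equation*}
Next I would take the infimum over $\sH$. Because $\sC_{\ell_{0-1}}(h, x)$ depends on $h$ only via $\hh(x)$, and $\hh(x)$ ranges over the (nonempty, finite since $\sY = [n]$) set $\{\hh(x) : h \in \sH\} \subseteq \sY$, the infimum is attained and equals
\begin{equation*}
\sC^*_{\ell_{0-1}}(\sH, x) = 1 - \max_{y \in \{\hh(x) : h \in \sH\}} \Pr(y \mid x),
\end{equation*}
which is the first identity. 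Subtracting this from $\sC_{\ell_{0-1}}(h, x) = 1 - \Pr(\hh(x) \mid x)$ immediately yields the second identity for $\Delta \sC_{\ell_{0-1}, \sH}(h, x)$.

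There is no real obstacle here; the only point requiring a sentence of justification is that the infimum over $\sH$ can be replaced by a maximum over the label set $\{\hh(x) : h \in \sH\}$, which follows from finiteness of $\sY$ together with the fact that the conditional error factors through $\hh(x)$. No measurability or tie-breaking subtleties arise beyond the deterministic tie-breaking rule already fixed in the preliminaries.
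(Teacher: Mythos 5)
Your proof is correct and is exactly the standard argument for this result (which the paper cites from \citet{awasthi2022multi} without reproving): compute $\sC_{\ell_{0-1}}(h,x) = 1 - \Pr(\hh(x)\mid x)$, observe the dependence on $h$ factors through $\hh(x)$, and replace the infimum over $\sH$ by a maximum over the finite set of achievable labels. No gaps.
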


\ignore{
\noindent \textbf{Fundamental tool.}
The following result from \citet{mao2025enhanced} provides a fundamental tool for deriving enhanced $\sH$-consistency bounds.

\begin{restatable}{theorem}{NewBoundPower}
\label{Thm:new-bound-power}
Assume there exist measurable functions $\alpha, \beta$ mapping from $\sH \times \sX$ to the non-negative real numbers such that $\sup_{x}\alpha\paren*{ h, x } < +\infty$ and $\E_{ X }\bracket*{ \beta\paren*{ h, X } } = 1$, and suppose that for some $s \geq 1$ (with conjugate $t$ satisfying $\frac{1}{s} + \frac{1}{t} = 1$),
\begin{equation*}
\frac{ \Delta \sC_{ \ell_2, \sH }\paren*{ h, x } }{ \beta\paren*{ h, x } }
\leq
\paren*{ \alpha\paren*{ h, x }  \Delta \sC_{ \ell_1, \sH }\paren*{ h, x } }^{ \frac{1}{s} }.
\end{equation*}
Then, with $\gamma\paren*{ h } = \E_{ X } \bracket*{ \alpha^{ \frac{t}{s} }\paren*{ h, X } \beta^{ t }\paren*{ h, X } }^{ \frac{1}{t} }$,
\begin{align*}
& \sE_{ \ell_2 }\paren*{ h } - \sE_{ \ell_2 }^*\paren*{ \sH } + \sM_{ \ell_2 }\paren*{ \sH }\\
& \qquad \leq
\gamma\paren*{ h } 
\bracket*{ \sE_{ \ell_1 }\paren*{ h } - \sE_{ \ell_1 }^*\paren*{ \sH } + \sM_{ \ell_1 }\paren*{ \sH } }^{ \frac{1}{s} }.
\end{align*}
\end{restatable}
We will show below that $\gamma\paren*{ h }$ can be expressed in terms of the model margin, providing hypothesis-dependent constants that are tighter than the unit constant in the standard $\sH$-consistency bounds while keeping the same exponent.
}

\section{Model Margin (MM) Noise Assumption}
\label{sec:mm}

This section introduces the Model Margin (MM) noise condition, a
hypothesis-dependent low-noise assumption. In contrast to classical
Tsybakov noise assumption, the proposed condition depends on a
predictor $h$ via its \emph{model margin}.

\subsection{Definition and comparison with Tsybakov noise}
\label{sec:def}

Let $h^* \in \sH_{\rm all}$ be a Bayes classifier. By
\citep[Lemma~2.1]{MaoMohriZhong2024}, there exists indeed a measurable
function $h^*$ and for all $x \in \sX$, it satisfies
$\Pr(\hh^*(x) \mid x) = \max_{y \in \sY} \Pr(y \mid x)$. For each
input $x$, define the \emph{model margin}
\begin{equation*}
\mu\paren*{h, x} \coloneqq \Pr\paren*{ \hh^*(x) \mid x } - \Pr\paren*{ \hh(x) \mid x } \geq 0.
\end{equation*}
Note that $\mu(h, x) = \Delta \sC_{\ell_{0-1}, \sH_{\rm all}}(h, x)$ by Lemma~\ref{lemma:explicit_assumption_01}.

\begin{definition}[\emph{Model Margin (MM) noise}]
There exist $B > 0$ and $\alpha \in \bracket*{0, 1}$ such that for all $h \in \sH$ and all $t > 0$,
\begin{equation*}
\Pr\bracket*{ 0 < \mu\paren*{h, X} \leq t } \leq B t^{ \frac{\alpha}{1 - \alpha} }.
\end{equation*}
\end{definition}

Intuitively, this condition requires that for each hypothesis $h$, the
probability mass of points where $h$ disagrees with the Bayes
classifier but with a small model‐dependent margin is controlled. It
is therefore a \emph{weaker} assumption than Tsybakov noise
\citep{MammenTsybakov1999}.

\begin{definition}[Tsybakov noise \cite{mao2025enhanced}]
There exist $B > 0$ and $\alpha \in \bracket*{0, 1}$ such that
\begin{equation*}
\forall t > 0, \quad \Pr[\gamma(X) \leq t] \leq B t^{\frac{\alpha}{1 - \alpha}}.
\end{equation*}
where
$\gamma(x) = \Pr(y_{\max} \mid x) - \sup_{y \neq y_{\max}} \Pr(y \mid
x)$ with $y_{\max} = \argmax_{y \in \sY} \Pr(y \mid x)$ is the minimal
margin for a point $x \in \sX$.
\end{definition}

The Tsybakov condition is strong because it is independent of any
specific hypothesis. However, this is also its main limitation. It can
be overly pessimistic if the regions where the minimal margin
$\gamma(x)$ is small are regions where the classifiers in $\sH$
already perform well (i.e., they predict the Bayes label
$\hh^*(x)$). Our MM noise condition, $\mu(h, x)$, is designed to
handle exactly this scenario, as $\mu(h, x)$ becomes zero for such
correct predictions, effectively ignoring the small minimal margin.

\paragraph{Tsybakov $\Rightarrow$ MM (MM is weaker).}
Let $\eta_1(x) = \Pr(y_{\max} \mid x) = \Pr(\hh^*(x) \mid x)$ be the
probability of the Bayes-optimal label and
$\eta_2(x) = \max_{y \neq \hh^*(x)} \Pr(y \mid x)$ be the probability
of the most likely incorrect label. The minimal margin is
$\gamma(x) = \eta_1(x) - \eta_2(x)$.  On any $x$ where
$\hh \paren*{x} \neq \hh^*\paren*{x}$, we have
$\Pr(\hh(x) \mid x) \leq \eta_2(x)$. Thus,
\begin{align*}
\mu\paren*{h, x} &= \Pr \paren*{\hh^*(x) \mid x} - \Pr \paren*{\hh(x) \mid x}\\
&\geq \eta_1(x) - \eta_2(x)\\
&= \gamma\paren*{x}.
\end{align*}
Since $\mu(h, x) = 0$ when $\hh(x) = \hh^*(x)$, the region of disagreement is $\curl*{x : \mu(h, x) > 0}$. This implies that $\mu(h, x) > 0$ only if $\hh(x) \neq \hh^*(x)$, in which case $\mu(h, x) \geq \gamma(x)$.
Therefore, for all $t > 0$,
\begin{align*}
\curl*{ 0 < \mu\paren*{h, X} \leq t } &\subseteq \curl*{\gamma\paren*{X} \leq t } \\
\Pr\bracket*{ 0 < \mu\paren*{h, X} \leq t } &\leq \Pr\bracket*{\gamma \paren*{X} \leq t }.
\end{align*}
Any Tsybakov noise tail bound on $\gamma$ immediately yields the MM noise tail bound for all $h\in\sH$.

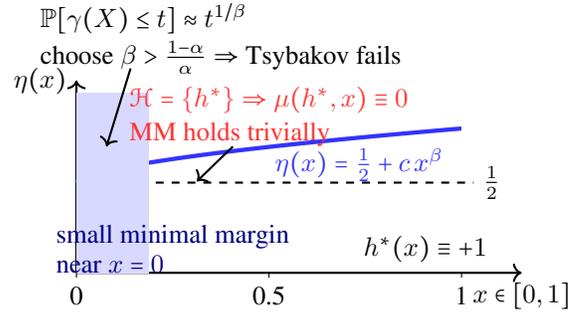
\begin{figure}[t]
\centering
\begin{tikzpicture}[scale=.8]
  \draw[->, thick] (0,0) -- (7.4,0) node[below] {$x \in [0, 1]$};
  \draw[->, thick] (0,0) -- (0,3.2) node[left] {$\eta(x)$};

  \draw[dashed, thick] (0,1.5) -- (6.6,1.5);
  \node at (6.9,1.5) {$\tfrac{1}{2}$};

  \draw[domain=0:6.4, smooth, samples=150, blue!80, ultra thick]
    plot(\x, {1.5 + 0.9*pow(\x/6.4,0.6)});
  \node[blue!80] at (4.7,1.8) {$\eta(x) = \tfrac{1}{2} + c\,x^{\beta}$};

  \fill[blue!15] (0,0) rectangle (1.2,3.0);
  \node[blue!50!black, align=left] at (1.6,0.4)
    {small minimal margin\\ near $x = 0$};

  \node[black] at (5.8,0.4) {$h^*(x) \equiv +1$};

  \draw[->, thick] (2.6,2.1) -- (2.0,1.52);
  \node[red!80, align=left] at (3.2,2.6)
    {$\sH = \curl*{h^*}  \Rightarrow \mu(h^*, x)\equiv 0$\\[2pt]
     MM holds trivially};

  \draw[->, thick] (0.9,3.4) -- (0.5,2.1);
  \node[black, align=left] at (2.4,3.9)
    {$\Pr\bracket*{\gamma(X) \leq t} \approx t^{1/\beta}$\\[2pt]
     choose $\beta > \tfrac{1 - \alpha}{\alpha}$
     $\Rightarrow$ Tsybakov fails};

  \foreach \x/\lab in {0/0, 3.2/0.5, 6.4/1}
    \draw (\x,0) -- (\x,-0.08) node[below] {\lab};
  \end{tikzpicture}
\caption{ \textbf{MM holds while Tsybakov fails.}  Consider
  $X \sim \mathrm{Unif}[0, 1]$ and $\eta(x) = \tfrac{1}{2} + c\, x^{\beta}$
  (with $c \in \paren*{0, \tfrac12}, \beta > 0$).  The Bayes classifier is
  $h^*(x) \equiv +1$ and the minimal margin is
  $\gamma(x) = 2c\, x^{\beta}$.  The Tsybakov tail is
  $\Pr \bracket*{\gamma(X) \leq t} = \paren*{ \tfrac{t}{2c}
  }^{1/\beta}$. If we choose $\beta > \tfrac{1 - \alpha}{\alpha}$, this
  tail is "heavy" and the Tsybakov condition fails for $\alpha$.
  However, if we use the restricted class $\sH = \curl*{h^*}$, the model
  margin is $\mu(h^*, x)\equiv 0$. Thus,
  $\Pr \bracket*{0 < \mu(h^*, X) \leq t} = 0$, and the MM noise condition
  holds trivially. }
\label{fig:mm-weaker-than-tsy}
\end{figure}

\begin{theorem}[MM $\not\Rightarrow$ Tsybakov]
\label{prop:mm-not-tsy}
There exist a distribution $\sD$ and a hypothesis set $\sH$ such that the MM noise condition holds while the Tsybakov noise condition fails.

\end{theorem}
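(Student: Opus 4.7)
The statement is proved by exhibiting the explicit construction already sketched in Figure~\ref{fig:mm-weaker-than-tsy}. The strategy is to (i) pick a distribution whose minimal margin $\gamma(X)$ has a polynomial tail that is too heavy to satisfy the Tsybakov bound at the target exponent $\alpha$, and (ii) collapse the hypothesis set to a singleton containing only the Bayes classifier, so that the model margin $\mu(h,\cdot)$ vanishes identically and the MM tail bound becomes trivial. Since MM is a hypothesis-dependent condition while Tsybakov is purely distributional, the separation is obtained by exploiting exactly this asymmetry.

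\textbf{Construction.} I would fix any $\alpha \in [0,1)$ and pick $\beta > (1-\alpha)/\alpha$ and $c \in (0,\tfrac12)$. Take $\sX = [0,1]$, $\sY = \curl*{-1,+1}$, $X \sim \mathrm{Unif}[0,1]$, and $\eta(x) = \tfrac12 + c\,x^{\beta}$. The Bayes classifier is $h^*(x) \equiv +1$, and the minimal margin equals $\gamma(x) = 2\eta(x) - 1 = 2c\,x^{\beta}$. Finally, let $\sH = \curl*{h^*}$.

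\textbf{Tsybakov fails.} Since $\gamma(X) \le t$ iff $X \le (t/(2c))^{1/\beta}$, a direct computation gives $\Pr[\gamma(X) \le t] = (t/(2c))^{1/\beta}$ for all $t \in (0,2c]$. For the Tsybakov bound $\Pr[\gamma(X) \le t] \le B\,t^{\alpha/(1-\alpha)}$ to hold for all small $t>0$ one would need $1/\beta \ge \alpha/(1-\alpha)$, which is excluded by the choice $\beta > (1-\alpha)/\alpha$. Hence as $t \to 0^+$ the ratio $(t/(2c))^{1/\beta}/(B\,t^{\alpha/(1-\alpha)})$ diverges, so no $B>0$ works and the Tsybakov condition fails at parameter $\alpha$.

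\textbf{MM holds trivially.} Because $\sH = \curl*{h^*}$, every hypothesis in $\sH$ agrees with the Bayes classifier everywhere, so $\mu(h,x) = \Pr(\hh^*(x)\mid x) - \Pr(\hh(x)\mid x) = 0$ for all $h \in \sH$ and all $x$. Consequently $\Pr[0 < \mu(h,X) \le t] = 0$ for every $t > 0$, and the MM bound is satisfied with any $B > 0$ and the same exponent $\alpha$. This establishes the desired separation.

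\textbf{Anticipated obstacle.} There is no substantial technical difficulty; the only subtlety is a modeling one, namely making sure the example is not considered ``cheating'' by a skeptical reader. Collapsing $\sH$ to $\curl*{h^*}$ makes the point cleanly but somewhat trivially. To strengthen the message, I would remark (or verify in a second example) that the same phenomenon persists when $\sH$ is any class all of whose members agree with $h^*$ on a neighborhood of $x=0$, since only behavior on the region $\curl*{\mu(h,X)>0}$ enters the MM tail. This keeps the proof short while illustrating that the separation is robust to enlarging $\sH$ beyond a singleton.
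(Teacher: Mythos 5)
Your proposal is correct and follows essentially the same route as the paper's own proof: the identical construction with $X \sim \mathrm{Unif}[0,1]$, $\eta(x) = \tfrac12 + c\,x^{\beta}$, $\beta > (1-\alpha)/\alpha$ forcing the Tsybakov tail $\paren*{t/(2c)}^{1/\beta}$ to be too heavy, and the singleton class $\sH = \curl*{h^*}$ making $\mu(h^*,\cdot) \equiv 0$ so that MM holds trivially. Your closing remark about robustness to enlarging $\sH$ to any class agreeing with $h^*$ near $x=0$ is a nice addition not present in the paper, but the core argument is the same.
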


\begin{proof}
\textbf{Explicit example (see Figure~\ref{fig:mm-weaker-than-tsy} for illustration).}
Let $\sX = [0,1]$ with $X \sim \mathrm{Unif}[0, 1]$, $\sY = \curl*{-1, +1}$, and define
\begin{equation*}
\eta(x) \coloneqq \Pr \paren*{Y = +1 \mid X = x} = \tfrac{1}{2} + c x^{\beta}, \quad c \in \paren*{0, \tfrac{1}{2}}, \beta > 0.
\end{equation*}
Then one Bayes classifier is $h^*(x) \equiv +1$ and the minimal margin is $\gamma(x) = \abs*{2\eta(x)-1} = 2c\, x^{\beta}$.
For any $t>0$,
\begin{equation*}
\Pr\bracket*{\gamma(X) \leq t} = \Pr\paren*{ X \leq \paren*{ \tfrac{t}{2c} }^{1/\beta} } = \paren*{ \tfrac{t}{2c} }^{1/\beta}.
\end{equation*}
Fix a target Tsybakov exponent $\alpha \in [0,1)$ and choose $\beta > \tfrac{1-\alpha}{\alpha}$ (e.g., for $\alpha = \tfrac{1}{2}$ take $\beta > 1$). Then $t^{1/\beta}$ decays more slowly than $t^{\alpha/(1-\alpha)}$, so the Tsybakov noise condition fails:
\begin{align*}
& \Pr\bracket*{ \gamma(X) \leq t } \not\leq B\, t^{\alpha/(1-\alpha)}\\
& \quad \text{for any } B>0 \text{ and sufficiently small } t.
\end{align*}

Now let $\sH \coloneqq \curl*{ h^* }$ be the singleton hypothesis set. For $h = h^*$, $\mu \paren*{h, x} \equiv 0$, hence
\begin{equation*}
\Pr \bracket*{ 0 < \mu \paren*{h^*, X} \leq t } = 0 \quad \text{for all } t > 0,
\end{equation*}
which trivially satisfies the MM noise tail bound for any $\alpha$ and $B$. Therefore, MM noise condition holds while Tsybakov noise condition fails for the same $\alpha$ and $B$.    
\end{proof}

\subsection{Key property}
\label{sec:property}

Next, we establish a key property of the MM noise condition, which provides an inequality bounding the disagreement mass $\E \bracket*{1_{\mu(h, X)> 0}}$ by a power of the $0$–$1$ excess error.
\begin{restatable}[Disagreement mass vs.\ $0$–$1$ excess error]{lemma}{MM}
\label{lemma:MM}
Under MM noise, there exists $c = \tfrac{ B^{ 1-\alpha } }{ \alpha^{\alpha} } > 0$ such that for all $h \in \sH$,
\begin{align*}
\E\bracket*{ 1_{ \mu\paren*{h, X} > 0 } } 
& \leq c \E[\mu(h, X) 1_{\mu(h, X) > 0}]^\alpha \\
&= c \paren*{ \sE_{\ell_{0-1}}\paren*{h} - \sE_{\ell_{0-1}}\paren*{h^*} }^{\alpha}.
\end{align*}
\end{restatable}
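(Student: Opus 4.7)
Let $A = \E\bracket*{\mu(h,X)\, 1_{\mu(h,X)>0}}$. Since $\mu(h,\cdot)\ge 0$, the indicator is redundant and $A = \E\bracket*{\mu(h,X)}$. Using the identity $\mu(h,x)=\Delta\sC_{\ell_{0-1},\sH_{\rm all}}(h,x)=\sC_{\ell_{0-1}}(h,x)-\sC_{\ell_{0-1}}(h^*,x)$ recorded right after the definition of the model margin, taking expectation over $X$ gives $A = \sE_{\ell_{0-1}}(h)-\sE_{\ell_{0-1}}(h^*)$, which is the equality part of the claim. It remains to prove the inequality $\Pr\bracket*{\mu(h,X)>0}\le c\,A^\alpha$ with $c = B^{1-\alpha}/\alpha^\alpha$.

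The most naive route --- a Markov split $\Pr\bracket*{\mu>0}\le \Pr\bracket*{0<\mu\le t_0}+\Pr\bracket*{\mu>t_0}\le B t_0^{\alpha/(1-\alpha)}+A/t_0$, then optimize in $t_0$ --- recovers only the slightly worse constant $B^{1-\alpha}/\bracket*{\alpha^\alpha(1-\alpha)^{1-\alpha}}$. To hit the sharp constant I replace the Markov step by a layer-cake lower bound. For any $t_0>0$,
\[
A = \int_0^\infty \Pr\bracket*{\mu(h,X)>t}\,dt \ge \int_0^{t_0}\Pr\bracket*{\mu(h,X)>t}\,dt.
\]
I decompose $\Pr\bracket*{\mu>t}=\Pr\bracket*{\mu>0}-\Pr\bracket*{0<\mu\le t}$ and apply MM noise to get $\Pr\bracket*{\mu>t}\ge \Pr\bracket*{\mu>0} - B t^{\alpha/(1-\alpha)}$. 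Integrating, with $\int_0^{t_0} t^{\alpha/(1-\alpha)}\,dt = (1-\alpha)\, t_0^{1/(1-\alpha)}$, produces
\[
A\ge t_0\,\Pr\bracket*{\mu(h,X)>0} - B(1-\alpha)\,t_0^{1/(1-\alpha)},
\]
which rearranges to $\Pr\bracket*{\mu(h,X)>0}\le A/t_0 + B(1-\alpha)\,t_0^{\alpha/(1-\alpha)}$.

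Optimizing over $t_0>0$, the first-order condition simplifies to $t_0^{1/(1-\alpha)}=A/(B\alpha)$, i.e., $t_0=\paren*{A/(B\alpha)}^{1-\alpha}$. Substituting this back, the two terms become $A^\alpha(B\alpha)^{1-\alpha}$ and $B^{1-\alpha}(1-\alpha)\,A^\alpha/\alpha^\alpha$, whose sum collapses as
\[
A^\alpha B^{1-\alpha}\paren*{\alpha^{1-\alpha}+\tfrac{1-\alpha}{\alpha^\alpha}} = \frac{A^\alpha B^{1-\alpha}}{\alpha^\alpha}\bracket*{\alpha + (1-\alpha)} = \frac{B^{1-\alpha}}{\alpha^\alpha}\,A^\alpha,
\]
which is exactly $c\,A^\alpha$. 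The main obstacle is reaching the sharp constant $B^{1-\alpha}/\alpha^\alpha$: a direct Markov partition leaks an extra factor $(1-\alpha)^{1-\alpha}$ because it only uses the MM tail bound at the single threshold $t_0$. The layer-cake argument recovers this information via the $(1-\alpha)$ factor produced on integrating $t^{\alpha/(1-\alpha)}$ from $0$ to $t_0$, and that factor recombines with the optimizer to give the clean $1/\alpha^\alpha$.
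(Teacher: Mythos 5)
Your proof is correct and follows essentially the same route as the paper: the layer-cake lower bound $\E[\mu 1_{\mu>0}]\ge\int_0^{t_0}\Pr[\mu>t]\,dt$, the decomposition $\Pr[\mu>t]=\Pr[\mu>0]-\Pr[0<\mu\le t]$ combined with the MM tail bound, and optimization of the resulting threshold. The only cosmetic difference is that you rearrange before optimizing over $t_0$ while the paper maximizes the lower bound first and then solves for $\Pr[\mu>0]$; both yield the same optimizer and the same constant $B^{1-\alpha}/\alpha^{\alpha}$.
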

\begin{proof}
By definition of the expectation and the Lebesgue integral, for any $u > 0$,
\begin{align*}
& \E\bracket*{ \mu(h, X) 1_{\mu(h, X) > 0} }\\
&= \int_{0}^{+\infty} \Pr\bracket*{ \mu(h, X) 1_{\mu(h, X) > 0} > t } \, dt\\
&\geq \int_{0}^{u} \Pr\bracket*{ \mu(h, X) > t } \, dt.  
\end{align*}
Since the equality $\Pr\bracket*{ \mu(h, X) > t } = \Pr\bracket*{ \mu(h, X) > 0 } - \Pr\bracket*{ 0 < \mu(h, X) \leq t }$ holds, and by the MM noise assumption $\Pr\bracket*{ 0 < \mu(h, X) \leq t } \leq B t^{\frac{\alpha}{1 - \alpha}}$, we have
\begin{align*}
& \E\bracket*{ \mu(h, X) 1_{\mu(h, X) > 0} }\\
& \geq \int_{0}^{u} \paren*{ \Pr\bracket*{ \mu(h, X) > 0 } - B t^{\frac{\alpha}{1 - \alpha}} } \, dt\\
& = u \Pr\bracket*{ \mu(h, X) > 0 } - B (1 - \alpha) u^{\frac{1}{1 - \alpha}}.    
\end{align*}
Maximizing the right-hand side over $u > 0$ gives
\begin{align*}
u^{*} &= \paren*{ \frac{ \Pr\bracket*{ \mu(h, X) > 0 } }{ B } }^{\frac{1 - \alpha}{\alpha}},\\
\E\bracket*{ \mu(h, X) 1_{\mu(h, X) > 0} }
&\geq \alpha B^{-\frac{1 - \alpha}{\alpha}} \Pr\bracket*{ \mu(h, X) > 0 }^{\frac{1}{\alpha}}.    
\end{align*}
Rearranging,
$$
\Pr\bracket*{ \mu(h, X) > 0 }
\leq \frac{ B^{1 - \alpha} }{ \alpha^{\alpha} } \paren*{ \E\bracket*{ \mu(h, X) 1_{\mu(h, X) > 0} } }^{\alpha}.
$$
By Lemma~\ref{lemma:explicit_assumption_01}, 
\begin{align*}
\E\bracket*{ \mu(h, X) 1_{\mu(h, X) > 0} } 
& = \E \bracket*{\Delta \sC_{\ell_{0-1}, \sH_{\rm all}}(h, x)}\\
& = \sE_{\ell_{0-1}}(h) - \sE_{\ell_{0-1}}(h^*).    
\end{align*}
Thus the claim holds with $c = \frac{ B^{1 - \alpha} }{ \alpha^{\alpha} }$.
\end{proof}

The left-hand side of Lemma~\ref{lemma:MM} quantifies the disagreement mass between the hypothesis $h$ and a Bayes classifier. The right-hand side establishes that this probability is bounded by a power of the $0$–$1$ excess error. This property is central to the derivation of our $\sH$-consistency bounds.

\section{$\sH$-Consistency Bounds under MM Noise}
\label{sec:mm-bounds}

We now derive the main $\sH$-consistency bounds under the MM noise assumption, first for the multi-class setting and then for the binary case.

\subsection{Multi-class classification}
\label{sec:multi}

The following result provides the $\sH$-consistency bound based on the notion of $\mu(h, x)$. 

\begin{restatable}{theorem}{MMMulti}
\label{Thm:mm-multi}
Suppose there exists $s \geq 1$  with
conjugate number $t \geq 1$, that is $\frac{1}{s} + \frac{1}{t} = 1$, such that
\begin{equation*}
\forall (h, x) \in \sH \times \sX, \quad \Delta \sC_{ \ell_{0-1}, \sH }\paren*{ h, x } \leq \paren*{ \Delta \sC_{ \ell, \sH }\paren*{ h, x } }^{ \tfrac{1}{s} }.
\end{equation*}
Then, for every $h \in \sH$,
\begin{align*}
& \sE_{ \ell_{0-1} }\paren*{ h } - \sE^*_{ \ell_{0-1} }\paren*{ \sH } + \sM_{ \ell_{0-1} }\paren*{ \sH }\\
&\quad \leq \E_{X}\bracket*{ 1_{ \mu(h, X) > 0 } }^{\frac{1}{t}}
\paren*{\sE_{\ell}(h) - \sE^{*}_{\ell}(\sH) + \sM_{\ell}(\sH) }^{\frac{1}{s}}.
\end{align*}
\end{restatable}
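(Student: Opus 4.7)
The plan is to combine the pointwise surrogate-to-target bound with two observations: a simple pointwise \emph{support} property of the zero-one conditional regret (it vanishes wherever $\mu(h, x) = 0$), and Hölder's inequality with exponents $s$ and $t$. The MM noise assumption itself is not needed at this step; it only kicks in when one later converts $\E_{X}[1_{\mu(h, X) > 0}]^{1/t}$ into a power of the zero-one excess error via Lemma~\ref{lemma:MM}.

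First I would rewrite the left-hand side as a single expectation. Using $\sE_{\ell_{0-1}}(h) = \E_{X}[\sC_{\ell_{0-1}}(h, X)]$ and $\sM_{\ell_{0-1}}(\sH) = \sE^*_{\ell_{0-1}}(\sH) - \E_{X}[\sC^*_{\ell_{0-1}}(\sH, X)]$, the expression telescopes to
\begin{equation*}
\sE_{\ell_{0-1}}(h) - \sE^*_{\ell_{0-1}}(\sH) + \sM_{\ell_{0-1}}(\sH) = \E_{X}\bracket*{\Delta \sC_{\ell_{0-1}, \sH}(h, X)}.
\end{equation*}
The same identity applied to the surrogate gives $\sE_{\ell}(h) - \sE^*_{\ell}(\sH) + \sM_{\ell}(\sH) = \E_{X}[\Delta \sC_{\ell, \sH}(h, X)]$, which is the quantity I want to recover on the right.

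Next I would establish the key support property: since $\sH \subseteq \sH_{\rm all}$, we have $\sC^*_{\ell_{0-1}}(\sH, x) \geq \sC^*_{\ell_{0-1}}(\sH_{\rm all}, x)$, so $\Delta \sC_{\ell_{0-1}, \sH}(h, x) \leq \Delta \sC_{\ell_{0-1}, \sH_{\rm all}}(h, x) = \mu(h, x)$ by Lemma~\ref{lemma:explicit_assumption_01}. In particular, $\Delta \sC_{\ell_{0-1}, \sH}(h, x) = \Delta \sC_{\ell_{0-1}, \sH}(h, x)\, 1_{\mu(h, x) > 0}$ pointwise. Combining with the hypothesized pointwise bound,
\begin{equation*}
\Delta \sC_{\ell_{0-1}, \sH}(h, x) \leq (\Delta \sC_{\ell, \sH}(h, x))^{1/s}\, 1_{\mu(h, x) > 0}.
\end{equation*}

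Finally I would integrate and apply Hölder's inequality with conjugate exponents $s$ and $t$ to the pair $f(x) = (\Delta \sC_{\ell, \sH}(h, x))^{1/s}$ and $g(x) = 1_{\mu(h, x) > 0}$, using that $g^{t} = g$, to obtain
\begin{equation*}
\E_{X}[fg] \leq \E_{X}[\Delta \sC_{\ell, \sH}(h, X)]^{1/s}\, \E_{X}[1_{\mu(h, X) > 0}]^{1/t},
\end{equation*}
which, together with the two identities from the first step, is exactly the claimed bound. I do not anticipate any real obstacle; the only non-routine point is noticing that $\mu(h, x)$ dominates the in-class conditional regret $\Delta \sC_{\ell_{0-1}, \sH}(h, x)$, which is what lets the indicator $1_{\mu(h, X) > 0}$ be inserted for free before invoking Hölder.
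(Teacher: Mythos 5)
Your proposal is correct and follows essentially the same route as the paper's proof: rewrite the left-hand side as $\E_{X}[\Delta \sC_{\ell_{0-1}, \sH}(h, X)]$, insert the indicator $1_{\mu(h, X) > 0}$ using the domination $\Delta \sC_{\ell_{0-1}, \sH}(h, x) \leq \mu(h, x)$ from Lemma~\ref{lemma:explicit_assumption_01}, apply the pointwise assumption, and finish with H\"older's inequality. Your justification of the domination via $\sH \subseteq \sH_{\rm all}$ is just a slightly more explicit version of the paper's step, so there is nothing to add.
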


\begin{proof}
The total $0$-$1$ estimation error is the expectation of the conditional regret:
\begin{equation*}
\sE_{ \ell_{0-1} }\paren*{ h } - \sE^*_{ \ell_{0-1} }\paren*{ \sH } + \sM_{ \ell_{0-1} }\paren*{ \sH }
= \E_X\bracket*{ \Delta \sC_{ \ell_{0-1}, \sH }\paren*{ h, X } }.
\end{equation*}
From Lemma~\ref{lemma:explicit_assumption_01} and the definition of
$\mu(h, x)$, we have
$\Delta \sC_{ \ell_{0-1}, \sH }\paren*{ h, x } \leq \mu(h, x)$. This
implies that if $\mu(h, x) = 0$, then
$\Delta \sC_{ \ell_{0-1}, \sH }\paren*{ h, x } = 0$. We can therefore
write
\begin{align*}
\E_X\bracket*{ \Delta \sC_{ \ell_{0-1}, \sH }\paren*{ h, X } } 
& = \E_X\bracket*{ \Delta \sC_{ \ell_{0-1}, \sH }\paren*{ h, X }
  \cdot 1_{ \mu(h, X) > 0 } } \\
& \leq \E_X\bracket*{ \paren*{ \Delta \sC_{ \ell, \sH }
  \paren*{ h, X } }^{ \tfrac{1}{s} } \cdot 1_{ \mu(h, X) > 0 } },
\end{align*}
where the inequality uses the theorem's pointwise assumption. We now apply Hölder's inequality with conjugate exponents $s \geq 1$ and $t \geq 1$ such that $\frac{1}{s} + \frac{1}{t} = 1$:
\begin{align*}
& \E_X\bracket*{ \paren*{ \Delta \sC_{ \ell, \sH }\paren*{ h, X } }^{ \tfrac{1}{s} } \cdot 1_{ \mu(h, X) > 0 } } \\
& \leq \paren*{ \E_X\bracket*{ \paren*{ \paren*{ \Delta \sC_{ \ell, \sH }\paren*{ h, X } }^{ \tfrac{1}{s} } }^s } }^{\frac{1}{s}} 
\paren*{ \E_X\bracket*{ \paren*{ 1_{\mu(h, X) > 0} }^t } }^{\frac{1}{t}} \\
& = \paren*{ \E_X\bracket*{ \Delta \sC_{ \ell, \sH }\paren*{ h, X } } }^{\frac{1}{s}} \cdot \paren*{ \E_X\bracket*{ 1_{\mu(h, X) > 0} } }^{\frac{1}{t}}
\end{align*}
since $1^t = 1$. Substituting the definition of the surrogate estimation error, $\E_X\bracket*{ \Delta \sC_{ \ell, \sH }\paren*{ h, X } } = \sE_{\ell}(h) - \sE^{*}_{\ell}(\sH) + \sM_{\ell}(\sH)$, yields the claimed bound.
\end{proof}
Theorem~\ref{Thm:mm-multi} provides a strong theoretical
guarantee. Specifically, it shows that if the functional form of
$\Gamma(x) = x^{\frac1s}$ is applied for standard $\sH$-consistency
bounds, then, the constant can be improved from $1$ to a more refined,
hypothesis-dependent quantity:
$\E_X[1_{\mu(h, X) > 0}]^{\frac{1}{t}} \leq 1$. This refinement
applies to all existing $\sH$-consistency bounds in
\citep{awasthi2022multi,mao2023cross,MaoMohriZhong2024}. In
particular, for many cases where the surrogate loss $\ell$ is smooth,
we have $t = s = \frac12$, as shown by
\citep{MaoMohriZhong2024}. Next, we assume that the MM noise assumption holds and that the approximation error $\sE^*_{ \ell_{0-1} }(\sH) = \sE_{\ell_{0-1}}(h^*) = \sE_{\ell_{0-1}}^*\paren*{\sH} - \sE_{\ell_{0-1}}^*\paren*{\sH_{\rm all}} = 0$. This also implies that the minimizability gap $\sM_{\ell_{0-1}}(\sH) = 0$. The following
result provides the corresponding $\sH$-consistency bound in this
case.

\begin{restatable}{theorem}{MMMultiLowNoise}
\label{Thm:mm-multi-low-noise}
Suppose $\sE^*_{ \ell_{0-1} }(\sH) =
\sE_{\ell_{0-1}}(h^*)$ and there exists
$s \geq 1$ with conjugate number $t \geq 1$, that is
$\frac{1}{s} + \frac{1}{t} = 1$, such that
\begin{equation*}
  \forall (h, x) \in \sH \times \sX,
  \quad \Delta \sC_{ \ell_{0-1}, \sH }\paren*{ h, x }
  \leq \paren*{ \Delta \sC_{ \ell, \sH }\paren*{ h, x } }^{ \tfrac{1}{s} }.
\end{equation*}
Then, under MM noise, there exists $c > 0$ such that for all $h \in \sH$,
\begin{align*}
& \sE_{ \ell_{0-1} }(h) - \sE^{*}_{ \ell_{0-1} }(\sH) + \sM_{\ell_{0-1}}\paren*{ \sH }\\
& \quad \leq c^{ \frac{ s - 1 }{ s - \alpha\paren*{ s - 1 } } }
  \bracket*{ \sE_{\ell}\paren*{ h } - \sE^*_{\ell}\paren*{ \sH }
  + \sM_{\ell}\paren*{ \sH } }^{ \tfrac{ 1 }{ s - \alpha\paren*{ s - 1 } } }.
\end{align*}
\end{restatable}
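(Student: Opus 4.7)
The plan is to combine the two main results already established in the excerpt, namely Theorem~\ref{Thm:mm-multi} (the refined bound with the disagreement-mass constant) and Lemma~\ref{lemma:MM} (the MM noise control on the disagreement mass by a power of the $0$--$1$ excess error), and then solve algebraically for the $0$--$1$ estimation error. The assumption $\sE^*_{\ell_{0-1}}(\sH) = \sE_{\ell_{0-1}}(h^*)$ is the glue that lets us identify $\sE_{\ell_{0-1}}(h) - \sE_{\ell_{0-1}}(h^*)$ with $\sE_{\ell_{0-1}}(h) - \sE^*_{\ell_{0-1}}(\sH) + \sM_{\ell_{0-1}}(\sH)$, since this assumption also forces $\sM_{\ell_{0-1}}(\sH) = 0$.

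\textbf{Step 1: Apply Theorem~\ref{Thm:mm-multi}.} Writing $E_{0-1} \coloneqq \sE_{\ell_{0-1}}(h) - \sE^*_{\ell_{0-1}}(\sH) + \sM_{\ell_{0-1}}(\sH)$ and $E_\ell \coloneqq \sE_{\ell}(h) - \sE^*_{\ell}(\sH) + \sM_{\ell}(\sH)$, Theorem~\ref{Thm:mm-multi} immediately gives
\begin{equation*}
E_{0-1} \leq \E_X\bracket*{1_{\mu(h,X) > 0}}^{\frac{1}{t}} \, E_\ell^{\frac{1}{s}}.
\end{equation*}

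\textbf{Step 2: Apply Lemma~\ref{lemma:MM}.} Under MM noise, Lemma~\ref{lemma:MM} yields $\E[1_{\mu(h,X)>0}] \leq c\, (\sE_{\ell_{0-1}}(h) - \sE_{\ell_{0-1}}(h^*))^{\alpha}$ with $c = B^{1-\alpha}/\alpha^{\alpha}$. Using the zero-approximation-error hypothesis $\sE^*_{\ell_{0-1}}(\sH) = \sE_{\ell_{0-1}}(h^*)$ (which also forces $\sM_{\ell_{0-1}}(\sH) = 0$ since $\sM_{\ell_{0-1}}(\sH) \leq \sE^*_{\ell_{0-1}}(\sH) - \sE^*_{\ell_{0-1}}(\sH_{\rm all})$), we can rewrite this as $\E[1_{\mu(h,X) > 0}] \leq c\, E_{0-1}^{\alpha}$.

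\textbf{Step 3: Substitute and solve.} Plugging the Step~2 bound into Step~1 gives $E_{0-1} \leq c^{1/t}\, E_{0-1}^{\alpha/t}\, E_\ell^{1/s}$, i.e.\
\begin{equation*}
E_{0-1}^{1 - \alpha/t} \leq c^{1/t}\, E_\ell^{1/s}.
\end{equation*}
Since $1/t = (s-1)/s$, a short computation gives $1 - \alpha/t = (s - \alpha(s-1))/s$. Raising both sides to the power $s/(s - \alpha(s-1))$ yields the claimed bound with constant $c^{(s-1)/(s-\alpha(s-1))}$.

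\textbf{Main obstacle.} There is no serious analytical obstacle: the two ingredients are already in place, and what remains is bookkeeping of exponents via the conjugacy identity $1/s + 1/t = 1$. The only subtlety worth being explicit about is the identification $E_{0-1} = \sE_{\ell_{0-1}}(h) - \sE_{\ell_{0-1}}(h^*)$ in Step~2, which is where the zero-approximation-error assumption is used; this step is what makes it legitimate to feed the $0$--$1$ excess-error quantity appearing in Lemma~\ref{lemma:MM} back into the left-hand side of the bound produced by Theorem~\ref{Thm:mm-multi}, thereby closing the inequality in $E_{0-1}$.
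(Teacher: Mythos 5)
Your proposal is correct and follows essentially the same route as the paper's proof: apply Theorem~\ref{Thm:mm-multi}, substitute the disagreement-mass bound from Lemma~\ref{lemma:MM} using the zero-approximation-error assumption (which forces $\sM_{\ell_{0-1}}(\sH) = 0$), and solve the resulting self-referential inequality via the conjugacy identity $1/t = (s-1)/s$. No meaningful differences from the paper's argument.
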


\begin{proof}
We start from Theorem~\ref{Thm:mm-multi}. Let $A = \sE_{ \ell_{0-1} }(h) - \sE^{*}_{ \ell_{0-1} }(\sH)$
and $B = \sE_{\ell}(h) - \sE^{*}_{\ell}(\sH) + \sM_{\ell}(\sH)$. By
Lemma~\ref{lemma:MM} and the assumption $\sE^*_{ \ell_{0-1} }(\sH) =
\sE_{\ell_{0-1}}(h^*)$, there exists $c > 0$ with $\E_{X}\bracket*{
1_{ \mu(h, X) > 0 } } \leq c A^{\alpha}$.
Using $\sM_{ \ell_{0-1} }\paren*{ \sH } = 0$, Theorem~\ref{Thm:mm-multi}
gives $A \leq \paren*{c A^\alpha}^{\frac{1}{t}} B^{\frac{1}{s}} =
c^{\frac{1}{t}} A^{\frac{\alpha}{t}} B^{\frac{1}{s}}$. Rearranging
yields $A^{ 1 - \alpha/t } \leq c^{\frac{1}{t}} B^{\frac{1}{s}}$.
Using $\frac{1}{t} = \frac{s-1}{s}$, the exponent is $1 - \alpha/t =
\frac{s - \alpha(s-1)}{s}$. Raising both sides to the power of
$\frac{1}{1 - \alpha/t} = \frac{s}{s - \alpha(s-1)}$ gives:
\begin{align*}
A & \leq \paren*{ c^{\frac{s - 1}{s}} B^{\frac{1}{s}} }^{ \frac{s}{ s - \alpha (s - 1) } }
= c^{ \frac{ s - 1 }{ s - \alpha (s - 1) } } B^{ \frac{ 1 }{ s - \alpha (s - 1) } },
\end{align*}
which is the claimed bound.
\end{proof}

\ignore{
\begin{proof}
By Lemma~\ref{lemma:MM}, there exists $c = \tfrac{ B^{ 1-\alpha } }{ \alpha^{\alpha} } > 0$ such that
\begin{equation*}
\E_{X}\bracket*{ 1_{ \mu(h, X) > 0 } }
\leq c \paren*{ \sE_{ \ell_{0-1} }(h) - \sE^{*}_{ \ell_{0-1} }(\sH) }^{\alpha}.    
\end{equation*}
Combining this bound with the bound in Theorem~\ref{Thm:mm-multi} yields
\begin{align*}
A &\leq c^{\frac{1}{t}} A^{\frac{\alpha}{t}} B^{\frac{1}{s}}, \text{where }\\
A &= \sE_{ \ell_{0-1} }(h) - \sE^{*}_{ \ell_{0-1} }(\sH),\\
B &= \sE_{\ell}(h) - \sE^{*}_{\ell}(\sH) + \sM_{\ell}(\sH).    
\end{align*}
Rearranging,
\begin{equation*}
A^{ 1 - \frac{\alpha}{t} } \leq c^{\frac{1}{t}} B^{\frac{1}{s}}.
\end{equation*}
Since $\frac{1}{t} = 1 - \frac{1}{s} = \frac{s - 1}{s}$, we have
\begin{equation*}
1 - \frac{\alpha}{t} = 1 - \alpha \frac{s - 1}{s} = \frac{ s - \alpha (s - 1) }{ s }.
\end{equation*}
Hence
\begin{equation*}
A \leq c^{ \frac{1}{ t \paren*{ 1 - \frac{\alpha}{t} } } } B^{ \frac{1}{ s \paren*{ 1 - \frac{\alpha}{t} } } }
= c^{ \frac{ s - 1 }{ s - \alpha (s - 1) } }
B^{ \frac{ 1 }{ s - \alpha (s - 1) } },
\end{equation*}
which is the claimed bound.
\end{proof}
}
The exponent $\tfrac{1}{ s - \alpha\paren*{ s - 1 } }$ matches that
derived from Tsybakov noise condition
\citep[Theorem~9]{mao2025enhanced}, but our bounds are established
under the weaker MM noise assumption. For smooth binary surrogates
($s = 2$), this result demonstrates the interpolation between a
square-root rate (as $\alpha \to 0$) and a linear rate (as
$\alpha \to 1$).

\subsection{Binary classification}
\label{sec:binary}

The binary analogue, which can be viewed as a special case of the multi-class setting, satisfies the same bound via an identical proof.

\begin{restatable}{theorem}{MMBinary}
\label{Thm:mm-binary}
Suppose there exists $s \geq 1$  with
conjugate number $t \geq 1$, that is $\frac{1}{s} + \frac{1}{t} = 1$, such that
\begin{equation*}
\forall (h, x) \in \sH \times \sX, \quad  \Delta \sC_{ \ell^{\rm bi}_{0-1}, \sH }\paren*{ h, x } \leq \paren*{ \Delta \sC_{ \ell, \sH }\paren*{ h, x } }^{ \tfrac{1}{s} }.
\end{equation*}
Then, for every $h \in \sH$,
\begin{align*}
& \sE_{ \ell^{\rm bi}_{0-1} }\paren*{ h } - \sE^*_{ \ell^{\rm bi}_{0-1} }\paren*{ \sH } + \sM_{ \ell^{\rm bi}_{0-1} }\paren*{ \sH }\\
&\quad \leq \E_{X}\bracket*{ 1_{ \mu(h, X) > 0 } }^{\frac{1}{t}}
\paren*{\sE_{\ell}(h) - \sE^{*}_{\ell}(\sH) + \sM_{\ell}(\sH) }^{\frac{1}{s}}.
\end{align*}
Under MM noise and $\sM_{ \ell^{\rm bi}_{0-1} }\paren*{ \sH } = 0$, there exists $c > 0$, 
\begin{align*}
\forall h \in \sH,\; &\sE_{ \ell^{\rm bi}_{0-1} }\paren*{ h } - \sE^*_{ \ell^{\rm bi}_{0-1} }\paren*{ \sH }\\
& \leq c^{ \frac{ s - 1 }{ s - \alpha\paren*{ s - 1 } } }
\paren*{ \sE_{\ell}\paren*{ h } - \sE^*_{\ell}\paren*{ \sH } + \sM_{\ell}\paren*{ \sH } }^{ \tfrac{ 1 }{ s - \alpha\paren*{ s - 1 } } }.
\end{align*}
\end{restatable}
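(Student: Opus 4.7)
The plan is to mirror the two-step argument used for Theorems~\ref{Thm:mm-multi} and~\ref{Thm:mm-multi-low-noise}, since the binary setting is structurally identical to the multi-class case with $n = 2$ labels and the prediction $\hh(x) = \sign(h(x))$.

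For the H\"older-type bound (first half), I would rewrite the target side as $\E_X\bracket*{ \Delta \sC_{ \ell^{\rm bi}_{0-1}, \sH }\paren*{ h, X } }$ and observe that, by the binary specialization of Lemma~\ref{lemma:explicit_assumption_01}, $\Delta \sC_{\ell^{\rm bi}_{0-1}, \sH}(h, x) \leq \mu(h, x)$, so the conditional regret vanishes whenever $\mu(h, x) = 0$. I can therefore insert the indicator $1_{\mu(h, X) > 0}$ into the expectation without loss. Applying the pointwise hypothesis $\Delta \sC_{\ell^{\rm bi}_{0-1}, \sH}(h, x) \leq \paren*{\Delta \sC_{\ell, \sH}(h, x)}^{1/s}$ and then H\"older's inequality with conjugate exponents $(s, t)$ to the product $\paren*{\Delta \sC_{\ell, \sH}(h, X)}^{1/s} \cdot 1_{\mu(h, X) > 0}$ delivers the bound, exactly as in the proof of Theorem~\ref{Thm:mm-multi}.

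For the rate bound (second half), I would combine the first half with Lemma~\ref{lemma:MM}. Writing $A = \sE_{\ell^{\rm bi}_{0-1}}(h) - \sE^*_{\ell^{\rm bi}_{0-1}}(\sH)$ and $B = \sE_\ell(h) - \sE^*_\ell(\sH) + \sM_\ell(\sH)$, Lemma~\ref{lemma:MM} provides $\E_X\bracket*{ 1_{\mu(h, X) > 0} } \leq c A^\alpha$ once we invoke $\sM_{\ell^{\rm bi}_{0-1}}(\sH) = 0$ to identify $A$ with the binary $0$--$1$ excess error against $\sH_{\rm all}$. Substituting into the H\"older-type bound yields $A \leq c^{1/t} A^{\alpha/t} B^{1/s}$, and rearranging gives $A^{1 - \alpha/t} \leq c^{1/t} B^{1/s}$. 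Since $1 - \alpha/t = (s - \alpha(s-1))/s$, raising both sides to the reciprocal exponent produces the claimed rate with exponent $1/(s - \alpha(s-1))$ and constant $c^{(s-1)/(s-\alpha(s-1))}$.

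The main obstacle is a small bookkeeping step rather than a genuinely new idea: confirming that Lemma~\ref{lemma:MM}, whose conclusion is phrased in terms of $\sE_{\ell_{0-1}}(h) - \sE_{\ell_{0-1}}(h^*)$, transfers correctly to the binary excess error $\sE_{\ell^{\rm bi}_{0-1}}(h) - \sE^*_{\ell^{\rm bi}_{0-1}}(\sH)$ under $\sM_{\ell^{\rm bi}_{0-1}}(\sH) = 0$. This follows because $\mu(h, x) = \Delta \sC_{\ell^{\rm bi}_{0-1}, \sH_{\rm all}}(h, x)$ in the binary case as well, and the vanishing minimizability gap aligns the two reference points so that the power-$\alpha$ inequality is preserved verbatim. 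Apart from this identification, the argument is identical to the multi-class proof and no new technical ideas are required.
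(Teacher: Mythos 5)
Your proposal matches the paper's proof exactly: the paper itself proves Theorem~\ref{Thm:mm-binary} by noting that the arguments of Theorems~\ref{Thm:mm-multi} and~\ref{Thm:mm-multi-low-noise} carry over verbatim once $\ell_{0-1}$ is replaced by $\ell^{\rm bi}_{0-1}$, which is precisely the two-step reduction you describe. Your extra remark on reconciling the reference point $\sE_{\ell^{\rm bi}_{0-1}}(h^*)$ in Lemma~\ref{lemma:MM} with $\sE^*_{\ell^{\rm bi}_{0-1}}(\sH)$ is a reasonable clarification of a detail the paper leaves implicit, not a deviation in approach.
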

\begin{proof}
The proofs are identical to that of Theorem~\ref{Thm:mm-multi} and Theorem~\ref{Thm:mm-multi-low-noise}, replacing the multi-class zero-one loss $\ell_{0-1}$ with the binary zero-one loss $\ell_{0-1}^{\rm bi}$.
\end{proof}

These results are analogous to the enhanced $\sH$-consistency bounds established under Tsybakov noise by \citet{mao2025enhanced}, but with the key distinction that the MM condition is \emph{weaker} and hypothesis-dependent. This shift also yields \emph{predictor-dependent} constants, quantified by $\E \bracket*{1_{\mu(h, X) > 0}}^{1 / t}$, which are absent from purely distributional Tsybakov analyses and align better with model selection. Crucially, we retain the desirable noise exponent $1/ \paren*{s-\alpha\paren*{s-1}}$ derived under Tsybakov noise. Consequently, practitioners gain bounds with the same favorable rates but under a less restrictive, model-aware assumption.

\section{Applications}
\label{sec:more}

\textbf{Structural properties.} The MM noise condition exhibits two key structural properties: \emph{(i) Monotonicity in $\sH$.} If $\sH_1 \subseteq \sH_2$ and the MM noise condition holds uniformly over $\sH_2$, then it also holds over $\sH_1$ with the same parameters.
\emph{(ii) Invariance.} If the scores $h \paren*{x, \cdot}$ are replaced by $\psi \circ h \paren*{x, \cdot}$ for any strictly increasing function $\psi$, the resulting predictions $\hh$ and the Bayes prediction $\hh^*$ remain unchanged. Consequently, the model margin $\mu\paren*{h, x}$ and the MM noise condition are also invariant.

These properties demonstrate the flexibility of the MM noise condition, highlighting its robustness to the specific choice of hypothesis set and the scaling of the prediction scores.

\textbf{Examples.} We now instantiate the derived bounds for common surrogate loss families. Table~\ref{tab:example-binary-mm} applies Theorem~\ref{Thm:mm-binary} to common binary margin-based losses in \citep{awasthi2022h}. The results confirm the linear rate ($s=1$) for the hinge loss and the $1/(2-\alpha)$ exponent for smooth surrogates ($s=2$), which interpolates between the square-root and linear regimes based on the noise parameter $\alpha$.

\begin{table}[h]
\centering
\resizebox{\columnwidth}{!}{
\begin{tabular}{l|l|l|l}
Loss & $\Phi(z)$ & $s$ & Bound under MM noise \\
\hline
Hinge
&
$\bracket*{ 1 - z }_{+}$
&
$1$
&
$\sE_{\ell}(h) - \sE^*_{\ell}(\sH)$
\\
Logistic
&
$\log\paren*{ 1 + e^{-z} }$
&
$2$
&
$c^{\tfrac{1}{ 2 - \alpha }} \bracket*{ \sE_{\ell}(h) - \sE^*_{\ell}(\sH) }^{ \tfrac{1}{ 2 - \alpha } }$
\\
Exponential
&
$e^{-z}$
&
$2$
&
$c^{\tfrac{1}{ 2 - \alpha }} \bracket*{ \sE_{\ell}(h) - \sE^*_{\ell}(\sH) }^{ \tfrac{1}{ 2 - \alpha } }$
\\
Sq-hinge
&
$\bracket*{ 1 - z }_{+}^{2}$
&
$2$
&
$c^{\tfrac{1}{ 2 - \alpha }} \bracket*{ \sE_{\ell}(h) - \sE^*_{\ell}(\sH) }^{ \tfrac{1}{ 2 - \alpha } }$
\\
\end{tabular}
}
\caption{Binary margin-based losses of the form
  $\ell_{\Phi}(h,x,y) = \Phi\paren*{ y\,h(x) }$ with corresponding
  $\sH$-consistency bounds.}
\label{tab:example-binary-mm}
\vskip -0.1in
\end{table}
Table~\ref{tab:example-mc-comp-mm} illustrates the bounds for comp-sum
losses in \citep{mao2023cross}, a broad family that generalizes
sum-type and cross-entropy-like surrogates. Under MM noise, the same
exponents as in the enhanced $\sH$-consistency bounds of
\citet{mao2025enhanced} are recovered, but they hold under the weaker
MM noise assumptions.

\begin{table}[h]
\centering
\resizebox{\columnwidth}{!}{
\begin{tabular}{l|l|l|l}
Loss & $\ell(h,x,y)$ & $s$ & Bound under MM noise \\
\hline
Mean absolute error (MAE)
&
$1 - p_{\theta}\paren*{ y \mid x }$
&
$1$
&
$\sE_{\ell}(h) - \sE^*_{\ell}(\sH)$
\\
Cross-entropy (Logistic)
&
$-\log \paren*{p_{\theta}\paren*{ y \mid x }}$
&
$2$
&
$c^{\tfrac{1}{ 2 - \alpha }} \bracket*{ \sE_{\ell}(h) - \sE^*_{\ell}(\sH) }^{ \tfrac{1}{ 2 - \alpha } }$
\\
Exponential comp-sum
&
$\frac{1}{p_{\theta}\paren*{ y \mid x }} - 1$
&
$2$
&
$c^{\tfrac{1}{ 2 - \alpha }} \bracket*{ \sE_{\ell}(h) - \sE^*_{\ell}(\sH) }^{ \tfrac{1}{ 2 - \alpha } }$
\\
Generalized cross-entropy
&
$\dfrac{ 1 - p_{\theta}\paren*{ y \mid x }^{q - 1} }{ q - 1 }$
&
$2$
&
$c^{\tfrac{1}{ 2 - \alpha }} \bracket*{ \sE_{\ell}(h) - \sE^*_{\ell}(\sH) }^{ \tfrac{1}{ 2 - \alpha } }$
\\
\end{tabular}
}
\caption{ Multi-class comp-sum losses.  Here
  $p_{\theta}\paren*{ y \mid x } = \exp\paren*{ h(x,y) } \big/
  \sum_{y'} \exp\paren*{ h(x,y') }$ is the softmax model.}
\label{tab:example-mc-comp-mm}
\end{table}

\section{Conclusion}

We introduced the MM noise condition, a \emph{weaker yet more
  flexible} hypothesis-dependent noise model than Tsybakov noise, and
derived enhanced $\sH$-consistency bounds for both binary and
multi-class classification. These results broaden the applicability of
existing bounds and provide adaptive guarantees under hypothesis-dependent low-noise conditions. Future work includes extending MM noise analysis
to regression settings and developing adaptive algorithms that
leverage its flexibility in practice.

\bibliography{mmhcb}

\end{document}